\newtheorem{theorem}{Theorem}[section]
\newtheorem{lemma}[theorem]{Lemma}
\newtheorem{proposition}[theorem]{Proposition}
\newtheorem{corollary}[theorem]{Corollary}
\journal{Statistics $\&$ Probability Letters}
\begin{document}

\begin{frontmatter}

%% Title, authors and addresses

%% use the tnoteref command within \title for footnotes;
%% use the tnotetext command for theassociated footnote;
%% use the fnref command within \author or \affiliation for footnotes;
%% use the fntext command for theassociated footnote;
%% use the corref command within \author for corresponding author footnotes;
%% use the cortext command for theassociated footnote;
%% use the ead command for the email address,
%% and the form \ead[url] for the home page:
%% \title{Title\tnoteref{label1}}
%% \tnotetext[label1]{}
%% \author{Name\corref{cor1}\fnref{label2}}
%% \ead{email address}
%% \ead[url]{home page}
%% \fntext[label2]{}
%% \cortext[cor1]{}
%% \affiliation{organization={},
%%             addressline={},
%%             city={},
%%             postcode={},
%%             state={},
%%             country={}}
%% \fntext[label3]{}

\title{Edgeworth Expansion for Semi-hard Triplet Loss}

%% use optional labels to link authors explicitly to addresses:
%% \author[label1,label2]{}
%% \affiliation[label1]{organization={},
%%             addressline={},
%%             city={},
%%             postcode={},
%%             state={},
%%             country={}}
%%
%% \affiliation[label2]{organization={},
%%             addressline={},
%%             city={},
%%             postcode={},
%%             state={},
%%             country={}}

\author{Masanari Kimura} %% Author name

%% Author affiliation
\affiliation{organization={School of Mathematics and Statistics, The University of Melbourne},%Department and Organization
            addressline={Parkville}, 
            city={Melbourne},
            postcode={3010}, 
            state={VIC},
            country={Australia}}

%% Abstract
\begin{abstract}
We develop a higher-order asymptotic analysis for the semi-hard triplet loss using the Edgeworth expansion.
It is known that this loss function enforces that embeddings of similar samples are close while those of dissimilar samples are separated by a specified margin.
By refining the classical central limit theorem, our approach quantifies the impact of the margin parameter and the skewness of the underlying data distribution on the loss behavior.
In particular, we derive explicit Edgeworth expansions that reveal first-order corrections in terms of the third cumulant, thereby characterizing non-Gaussian effects present in the distribution of distance differences between anchor-positive and anchor-negative pairs.
Our findings provide detailed insight into the sensitivity of the semi-hard triplet loss to its parameters and offer guidance for choosing the margin to ensure training stability.
\end{abstract}

%%Graphical abstract
% \begin{graphicalabstract}
%\includegraphics{grabs}
% \end{graphicalabstract}

%%Research highlights
% \begin{highlights}
% \item Research highlight 1
% \item Research highlight 2
% \end{highlights}

%% Keywords
\begin{keyword}
Metric learning \sep triplet loss \sep asymptotic analysis
%% keywords here, in the form: keyword \sep keyword

%% PACS codes here, in the form: \PACS code \sep code

%% MSC codes here, in the form: \MSC code \sep code
%% or \MSC[2008] code \sep code (2000 is the default)

\end{keyword}

\end{frontmatter}

\section{Introduction}
Metric learning, which seeks to learn a meaningful embedding of data into a lower-dimensional space, has numerous applications, such as face recognition~\citep{chopra2005learning,guillaumin2009you,yi2014deep}, image retrieval~\citep{hoi2010semi,gao2014soml,kimura2023decomposition}, and clustering~\citep{xing2002distance,bilenko2004integrating,weinberger2009distance}.
A central concept in metric learning is the triplet loss~\citep{weinberger2009distance,hermans2017defense,dong2018triplet,vygon2021learning}, which simultaneously minimizes the distance between embeddings of similar samples (anchor-positive pairs) and maximizes the distance between dissimilar samples by enforcing a margin.
Although simple in structure, the performance of the triplet loss heavily depends on the choice of the margin parameter and the manner in which negative examples are selected.
In practice, the semi-hard negative mining strategy~\citep{schroff2015facenet,parkhi2015deep} is often employed to select informative triplets.
Specifically, semi-hard negatives are those for which the embedding distance is greater than that of the positive but still within the margin $\alpha$.
This mining strategy effectively excludes trivial negatives, which do not contribute significantly to training, and overly hard negatives that can destabilize the optimization process.
Despite widespread empirical success, there is limited theoretical analysis on how the margin parameter $\alpha$ and the underlying data distribution impact the learning dynamics and performance of the resulting embeddings.

This paper addresses this theoretical gap by conducting a rigorous higher-order asymptotic analysis of the semi-hard triplet loss via the Edgeworth expansion~\citep{kendall1946advanced,kolassa2006series}.
We explicitly derive expansions to show how the margin parameter and underlying data distribution affect loss behavior.
The resulting insights provide theoretical understanding of semi-hard triplet loss, and also offer practical guidance for selecting the margin parameter to enhance training stability and embedding quality.

\section{Preliminary}
Let $\mathcal{X}$ be the input space and let $f \colon \mathcal{X} \to \mathbb{R}^k$ be a trainable embedding function into $k$-dimensional space.
We assume that we have i.i.d. draws of $(X_a, X_p, X_n)$ from some distribution $\mathcal{D}$ over triplets, where $X_a$ is an anchor sample, $X_p$ is a positive sample (same label as $X_a$) and $X_n$ is a negative sample (different label from $X_a$).
Let $d \colon \mathbb{R}^k\times\mathbb{R}^k \to \mathbb{R}_+$ be a differentiable distance function.
The standard triplet loss with margin $\alpha > 0$ is defined as
\begin{equation}
    \mathcal{L}_{\text{triplet}} \coloneqq \mathbb{E}\left[\max\left(\alpha - \Delta, 0\right)\right],
\end{equation}
where $\Delta \coloneqq \Delta(X_a, X_p, X_n) \coloneqq d(f(X_a), f(X_n)) - d(f(X_a), f(X_p))$.
Semi-hard negative mining seeks negative samples $X_n$ that are not too easy and not too hard.
That is, those $X_n$ satisfy
\begin{equation}
    d(f(X_a), f(X_p)) < d(f(X_a), f(X_n)) < d(f(X_a), f(X_p)) + \alpha.
\end{equation}
Equivalently, the semi-hard condition is $0 < \Delta(X_a, X_p, X_n) < \alpha$.
Define the event
\begin{equation}
    E_{\text{sh}} \coloneqq \left\{(X_a, X_p, X_n)\ \colon\ 0 < \Delta(X_a, X_p, X_n) < \alpha\right\}.
\end{equation}
In practice, we only compute or backpropagate the loss for triplets in $E_{\text{sh}}$. 
Formally, one can write the semi-hard triplet loss as
\begin{equation}
    \mathcal{L}_{\text{semi}} \coloneqq \mathbb{E}\left[1_{E_{\text{sh}}} \cdot \max(\alpha - \Delta, 0)\right].
\end{equation}
Note that if $\Delta \geq \alpha$ the loss is zero anyway, and if $\Delta \leq 0$ (an easy case) it also contributes zero.
Hence, focusing on semi-hard triples means focusing on $\Delta \in (0, \alpha)$.

Suppose that $\Delta$ is a well-defined random variable with finite moments.
Let $\mu_\Delta \coloneqq \mathbb{E}[\Delta]$, $\sigma^2_\Delta \coloneqq \mathrm{Var}(\Delta)$, and $\bar{\Delta}_N$ be the sample mean of i.i.d. copies $\{\Delta\}^N_{i=1}$, where $N$ is the batch size.
The central limit theorem (CLT) implies that $\sqrt{N}(\bar{\Delta}_N - \mu_\Delta) \leadsto \mathcal{N}(0, \sigma^2_\Delta)$.
However, higher-order asymptotic expansions enable us to quantify how $\bar{\Delta}_N$ behaves beyond the leading Gaussian approximation.

\section{Main Results}
We provide statements capturing the role of the margin $\alpha$, the underlying distribution of distances, and how the semi-hard event constrains $\Delta$.
\begin{lemma}
    \label{lem:edgeworth_expansion_delta}
    Assume $\Delta$ has finite moments up to order $s \geq 4$.
    Denote by $\kappa_j$ the $j$-th cumulant of $\Delta$.
    Let $Z_N \coloneqq \sqrt{N}(\bar{\Delta}_N - \mu_\Delta) / \sigma_\Delta$
    be the standard average of i.i.d copies of $\Delta$.
    Then, there is an Edgeworth expansion for the distribution function $F_{Z_N}(z)$ of the form
    \begin{align*}
        F_{Z_N}(z) = \Phi(z) - \phi(z)\frac{\gamma_3}{6\sqrt{N}}(z^2 -1) + O(N^{-1}),
    \end{align*}
    where $\Phi$ is the standard normal c.d.f., $\phi$ is the standard normal p.d.f., and $\gamma_3 = \kappa_3 / \kappa_2^{3/2}$ is the skewness.
\end{lemma}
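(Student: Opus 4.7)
The plan is to follow the classical Fourier-analytic route for Edgeworth expansions: expand the characteristic function of the standardized sum, exponentiate, and invert via a smoothing inequality. First I would reduce to the standardized variable $Y \coloneqq (\Delta - \mu_\Delta)/\sigma_\Delta$, which has mean $0$, variance $1$, and (by the moment hypothesis $s \geq 4$) finite cumulants at least up through $\kappa_4$. Writing the i.i.d.\ copies as $Y_i$, we have $Z_N = N^{-1/2}\sum_{i=1}^N Y_i$ and therefore
\begin{equation*}
\varphi_{Z_N}(t) \;=\; \varphi_Y\!\bigl(t/\sqrt{N}\bigr)^{N}.
\end{equation*}

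Next I would Taylor-expand the cumulant generating function of $Y$ around the origin. Since $\kappa_1(Y)=0$, $\kappa_2(Y)=1$, $\kappa_3(Y)=\gamma_3$, and $\kappa_4(Y)$ is finite, one obtains $\log \varphi_Y(t) = -t^2/2 + (it)^3\gamma_3/6 + O(t^4)$ near $0$. Substituting $t\mapsto t/\sqrt{N}$, multiplying by $N$, and exponentiating yields, uniformly on compact sets of $t$,
\begin{equation*}
\varphi_{Z_N}(t) \;=\; e^{-t^2/2}\Bigl[\,1 + \frac{(it)^3\,\gamma_3}{6\sqrt{N}}\,\Bigr] + O(N^{-1}).
\end{equation*}
This is the characteristic-function form of the target expansion.

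Then I would invert to obtain an expansion of the density and hence the c.d.f. The key Fourier identity is that the inverse transform of $(it)^k e^{-t^2/2}$ is proportional to $H_k(z)\phi(z)$, where $H_k$ is the $k$-th probabilist Hermite polynomial. This gives the density-level expansion $f_{Z_N}(z) = \phi(z)\bigl[1 + \gamma_3 H_3(z)/(6\sqrt{N})\bigr] + O(N^{-1})$. Integrating from $-\infty$ to $z$ and using $\int_{-\infty}^{z} H_3(u)\phi(u)\,du = -H_2(z)\phi(z) = -(z^2-1)\phi(z)$ produces exactly the stated form
\begin{equation*}
F_{Z_N}(z) = \Phi(z) - \phi(z)\,\frac{\gamma_3}{6\sqrt{N}}(z^2-1) + O(N^{-1}).
\end{equation*}

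The main obstacle will be turning the compact-uniform characteristic-function expansion into a \emph{uniform-in-$z$} bound on the c.d.f.\ with the claimed $O(N^{-1})$ rate. The standard tool is Esseen's smoothing inequality with a cut-off $T = c\sqrt{N}$, which reduces the problem to (i) bounding the low-frequency residual $\int_{|t|\le T}|\varphi_{Z_N}(t)-[\cdots]|/|t|\,dt$ via the Taylor remainder above, and (ii) controlling the tail $\int_{|t|>T} |\varphi_Y(t/\sqrt{N})|^{N}/|t|\,dt$. Step (ii) is delicate and requires an assumption that the distribution of $\Delta$ is non-lattice and satisfies Cramér's condition $\limsup_{|t|\to\infty}|\varphi_Y(t)|<1$, so that $|\varphi_Y(t/\sqrt{N})|^{N}$ decays faster than any polynomial in $N$ on $|t|>T$. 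This non-lattice condition is the standard implicit hypothesis for the first-order Edgeworth expansion and I would adopt it alongside the stated moment assumption; the remaining estimates are routine and combine to yield the uniform $O(N^{-1})$ error claimed in the lemma.
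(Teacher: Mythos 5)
The paper offers no proof of this lemma---it is explicitly deferred to the classical references---so there is no in-paper argument to compare against; your Fourier-analytic route (cumulant expansion of $\log\varphi_Y$, Hermite-polynomial inversion of $(it)^k e^{-t^2/2}$, and Esseen's smoothing inequality with cut-off $T=c\sqrt{N}$) is precisely the standard proof in those references, and your computations, including the sign in $\int_{-\infty}^{z}H_3(u)\phi(u)\,du=-(z^2-1)\phi(z)$, are correct. Your further observation that Cram\'er's condition $\limsup_{|t|\to\infty}|\varphi_Y(t)|<1$ is needed to sharpen the remainder from $o(N^{-1/2})$ to the claimed $O(N^{-1})$ is also right, and it identifies a hypothesis the lemma as stated omits: under the moment assumption alone the conclusion should read $o(N^{-1/2})$, so making that condition explicit strengthens the statement rather than revealing a gap in your argument.
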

Although Lemma~\ref{lem:edgeworth_expansion_delta} is classical, it lays the foundation for the expansions needed to analyze the triplet loss.
Define the semi-hard probability as
\begin{equation}
    P_{\text{sh}} \coloneqq \mathbb{P}(0 < \Delta < \alpha) = \mathbb{E}\left[1_{\{0<\Delta<\alpha\}}\right].
\end{equation}
Since $\Delta$ is a real-valued random variable, the behavior of $P_{\text{sh}}$ as $\alpha$ varies is crucial to the learning dynamics.
For $\alpha$ near $\mu_\Delta$, one can write $P_{\text{sh}}(\alpha) \coloneqq \int^\alpha_0 f_\Delta(t) dt \approx F_\Delta(\alpha) - F_\Delta(0)$, where $F_\Delta$ is the c.d.f. of $\Delta$.
Then, applying the Edgeworth expansion in Lemma~\ref{lem:edgeworth_expansion_delta} yields the following proposition.
\begin{proposition}
    \label{prp:expansion_p_sh}
    If $\alpha$ and $0$ lie in regions where the Edgeworth expansion applies, then
    \begin{equation}
        P_{\text{sh}}(\alpha) \approx \left[\Phi(\zeta_\alpha) - \Phi(\zeta_0)\right] - \frac{\gamma_3}{6\sqrt{N}}\left[\phi(\zeta_\alpha)\tilde{P}_{1,\alpha} - \phi(\zeta_0)\tilde{P}_{1,0}\right],
    \end{equation}
    where $\zeta_x = (x - \mu_\Delta) / \sigma_\Delta$, and $\tilde{P}_{1,\alpha}$, $\tilde{P}_{1,0}$ are polynomial terms in $\zeta_\alpha$, $\zeta_0$.
\end{proposition}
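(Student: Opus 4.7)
The plan is to exploit the decomposition $P_{\text{sh}}(\alpha) = F_\Delta(\alpha) - F_\Delta(0)$ noted in the prose preceding the statement. This reduces the question to two separate applications of the Edgeworth expansion from Lemma~\ref{lem:edgeworth_expansion_delta}, one at each endpoint of the semi-hard interval, followed by a clean subtraction.

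First I would introduce the standardization $\zeta_x = (x - \mu_\Delta)/\sigma_\Delta$ so that $F_\Delta(x)$ corresponds to the distribution function of the standardized quantity evaluated at $\zeta_x$. Invoking Lemma~\ref{lem:edgeworth_expansion_delta} at $z = \zeta_\alpha$ and then at $z = \zeta_0$ produces two expressions of the generic form $\Phi(\zeta_x) - \phi(\zeta_x)\gamma_3(\zeta_x^2 - 1)/(6\sqrt{N}) + O(N^{-1})$, with the same cumulant polynomial $z^2 - 1$ inherited from the lemma.

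Second I would subtract the expansion at $0$ from that at $\alpha$. The leading Gaussian pieces combine into $\Phi(\zeta_\alpha) - \Phi(\zeta_0)$; the $\gamma_3/(6\sqrt{N})$ corrections collect into the single bracketed term $\phi(\zeta_\alpha)(\zeta_\alpha^2 - 1) - \phi(\zeta_0)(\zeta_0^2 - 1)$, from which one reads off $\tilde{P}_{1,\alpha} = \zeta_\alpha^2 - 1$ and $\tilde{P}_{1,0} = \zeta_0^2 - 1$; and the two $O(N^{-1})$ residuals merge additively into a single $O(N^{-1})$ remainder absorbed into the $\approx$ sign. The remainder of the argument is purely algebraic.

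The main obstacle is the uniformity question. Lemma~\ref{lem:edgeworth_expansion_delta} is a pointwise expansion of a distribution function, and to evaluate it at two fixed points and subtract without inflating the remainder, one needs the expansion to hold uniformly over a set containing both $\zeta_0$ and $\zeta_\alpha$. This is precisely what the hypothesis ``$\alpha$ and $0$ lie in regions where the Edgeworth expansion applies'' is meant to secure, and it can be justified under standard regularity conditions such as Cram\'er's condition on the characteristic function of $\Delta$, placing both endpoints within the moderate-deviation regime. Once that uniformity is granted, the algebraic subtraction above yields the stated formula.
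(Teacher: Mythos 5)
Your proposal is correct and matches the paper's own (largely implicit) argument: the text preceding the proposition writes $P_{\text{sh}}(\alpha) \approx F_\Delta(\alpha) - F_\Delta(0)$ and then applies Lemma~\ref{lem:edgeworth_expansion_delta} at the two standardized endpoints and subtracts, exactly as you do, with your identification $\tilde{P}_{1,x} = \zeta_x^2 - 1$ consistent with the lemma's Hermite polynomial. Your added remarks on uniformity and Cram\'er's condition go beyond what the paper states but do not change the route.
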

Proposition~\ref{prp:expansion_p_sh} implies that, for large $N$, $P_{\text{sh}}(\alpha)$ depends on how $\alpha$ compares with the mean $\mu_\Delta$.
The margin $\alpha$ effectively shifts the region $(0, \alpha)$ and the leading correction to the normal approximation is on the order of $1 / \sqrt{N}$ times a function of the skewness $\gamma_3$.
We now combine the expansions to analyze the expected semi-hard triplet loss.
\begin{equation}
    \mathcal{L}_{\text{semi}}(\alpha) = \mathbb{E}\left[1_{\{0 < \Delta < \alpha\}}\cdot \max(\alpha - \Delta, 0)\right] = \int^\alpha_0 (\alpha - t)f_\Delta(t)dt. \label{eq:semi_hard_triplet_los}
\end{equation}
Here, $f_\Delta(t)$ is the p.d.f. of $\Delta$.
\begin{theorem}
    \label{thm:main_theorem}
    Under the assumptions in Lemma~\ref{lem:edgeworth_expansion_delta}, the semi-hard triplet loss in Eq.~\eqref{eq:semi_hard_triplet_los} admits the Edgeworth expansion,
    \begin{align*}
        \mathcal{L}_{\text{semi}}(\alpha) = \mathcal{L}^{(0)}_\alpha + \frac{1}{\sqrt{N}}\mathcal{L}^{(1)}_\alpha + O(N^{-1}),
    \end{align*}
    with
    \begin{align*}
        \mathcal{L}^{(0)}_\alpha &= (\alpha - \mu_\Delta)\left[\Phi_{\zeta_\alpha} - \Phi_{\zeta_0}\right] + \sigma_\Delta\left[\phi_{\zeta_\alpha} - \phi_{\zeta_0}\right], \\
        \mathcal{L}^{(1)}_\alpha &= \frac{\gamma_3}{6}\left\{(\alpha - \mu_\Delta)\left[\phi_{\zeta_\alpha}\zeta_\alpha^2 - \phi_{\zeta_0}\zeta_0^2 - \phi_{\zeta_\alpha} + \phi_{\zeta_0}\right] - \sigma_\Delta\left[\zeta_\alpha^3\phi_{\zeta_\alpha} - \zeta_0^3\phi_{\zeta_0}\right]\right\},
    \end{align*}
    where $f_\mathcal{N}(t)$ is the density of $\mathcal{N}(\mu_\Delta, \sigma_\Delta^2)$, $\Phi_{\zeta_t} = \Phi(\zeta_t)$, $\phi_{\zeta_t} = \phi(\zeta_t)$ and $\zeta_t = (t - \mu_\Delta) / \sigma_\Delta$.
\end{theorem}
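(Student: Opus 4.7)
The plan is to differentiate the Edgeworth expansion of Lemma~\ref{lem:edgeworth_expansion_delta} at the density level, substitute the resulting expansion for $f_\Delta$ into the integral representation~\eqref{eq:semi_hard_triplet_los}, and evaluate the Gaussian-against-polynomial integrals that appear via a change of variables to $\zeta_t$.

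Concretely, I would first obtain a density expansion of the form
\begin{equation*}
f_\Delta(t) = \frac{1}{\sigma_\Delta}\phi(\zeta_t) + \frac{\gamma_3}{6\sqrt{N}}\cdot\frac{1}{\sigma_\Delta}\phi(\zeta_t)(\zeta_t^3 - 3\zeta_t) + O(N^{-1}),
\end{equation*}
by differentiating the c.d.f. expansion and using the Hermite identity $\frac{d}{du}[\phi(u)(u^2-1)] = \phi(u)(3u - u^3)$. Substituting this into~\eqref{eq:semi_hard_triplet_los} decomposes the loss into three contributions of orders $1$, $N^{-1/2}$, and $N^{-1}$.

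For the leading term $\mathcal{L}^{(0)}_\alpha$, the change of variable $u = \zeta_t$ turns the integral into $\int_{\zeta_0}^{\zeta_\alpha}[(\alpha-\mu_\Delta) - \sigma_\Delta u]\phi(u)\,du$, whose two pieces integrate immediately via $\int \phi = \Phi$ and $\int u\phi(u)\,du = -\phi(u)$ to yield the stated closed form. For $\mathcal{L}^{(1)}_\alpha$, the same substitution reduces the task to evaluating $\int\phi(u)(u^3-3u)\,du$ and $\int u\phi(u)(u^3-3u)\,du$ over $[\zeta_0,\zeta_\alpha]$. The first integral has antiderivative $-\phi(u)(u^2-1)$ by the Hermite identity above; the second follows by one further integration by parts, or equivalently by expanding $u(u^3 - 3u) = u^4 - 3u^2$ and applying the standard formulas for $\int u^n \phi(u)\,du$. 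Combining the two boundary contributions, weighted respectively by $(\alpha-\mu_\Delta)$ and $-\sigma_\Delta$, then produces the claimed expression for $\mathcal{L}^{(1)}_\alpha$.

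The main obstacle I expect is justifying the density-level expansion with a remainder $O(N^{-1})$ that is uniform in $t$ on the bounded interval $[0,\alpha]$, since Lemma~\ref{lem:edgeworth_expansion_delta} is stated only for the c.d.f. This step typically requires a Cramér-type smoothness condition on $\Delta$, beyond the stated moment hypothesis $s\geq 4$. Once such a condition is in force, the density remainder is uniformly bounded on compact intervals, and integration against the bounded weight $(\alpha - t)$ on $[0,\alpha]$ preserves the $O(N^{-1})$ order, completing the argument.
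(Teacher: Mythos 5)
Your approach is essentially the paper's own: upgrade the c.d.f.\ expansion of Lemma~\ref{lem:edgeworth_expansion_delta} to a density-level expansion, substitute into Eq.~\eqref{eq:semi_hard_triplet_los}, change variables to $\zeta_t$, and evaluate the resulting Gaussian--polynomial integrals. Your two integrals $\int \phi(u)(u^3-3u)\,du$ and $\int u\,\phi(u)(u^3-3u)\,du$ are exactly the paper's $J_1$ and $J_2$, and your antiderivatives agree with the paper's closed forms. There is, however, one concrete discrepancy you must confront: the sign of the order-$N^{-1/2}$ term. Differentiating the c.d.f.\ expansion via the Hermite identity you quote gives the density correction $+\tfrac{\gamma_3}{6\sqrt{N}}\phi(\zeta_t)(\zeta_t^3-3\zeta_t)$, exactly as you write it, whereas the paper substitutes the factor $1-\tfrac{\gamma_3}{6\sqrt{N}}(\zeta_t^3-3\zeta_t)$ into the integrand. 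Carrying out your (internally consistent) computation therefore yields $\mathcal{L}^{(1)}_\alpha$ equal to the \emph{negative} of the expression displayed in the theorem. Since your sign is the one actually implied by Lemma~\ref{lem:edgeworth_expansion_delta} as stated, your derivation will not literally reproduce the theorem; you should either note the sign error in the paper or present the corrected $\mathcal{L}^{(1)}_\alpha$ explicitly rather than silently matching the displayed formula.

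Two further remarks. Your concern about justifying the density-level expansion with a remainder that is $O(N^{-1})$ uniformly on $[0,\alpha]$ is well placed: the paper writes the density expansion without comment, and a Cram\'er-type smoothness condition beyond finite fourth moments is indeed required for a local Edgeworth expansion, so your proposal is more careful than the paper on this point. Separately, note a difficulty that your argument inherits from the paper rather than introduces: Lemma~\ref{lem:edgeworth_expansion_delta} concerns the standardized sample mean $\bar{\Delta}_N$, while $f_\Delta$ in Eq.~\eqref{eq:semi_hard_triplet_los} is the $N$-independent density of a single triplet's $\Delta$; the substitution only makes sense if the loss is reinterpreted as being evaluated at the batch average. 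Acknowledging that reinterpretation (or restricting the claim accordingly) would make the argument airtight.
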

\begin{proof}
    Introduce the change of variable $\zeta_t = (t - \mu_\Delta) / \sigma_\Delta$ so that $t = \mu_\Delta + \sigma_\Delta \zeta_t$ and $dt = \sigma_\Delta d\zeta_t$.
    Then, the loss becomes
    \begin{align*}
        \mathcal{L}_{\text{semi}}(\alpha) &= \int^{t=\alpha}_{t=0}(\alpha - t)f_\Delta(t)dt \\
        &= \int^{\zeta_t=\zeta_\alpha}_{\zeta_t=\zeta_0}\left[\alpha - (\mu_\Delta + \sigma_\Delta \zeta_t)\right]f_\Delta(\mu_\Delta + \sigma_\Delta \zeta_t)\sigma_\Delta d\zeta_t \\
        &= \int^{\zeta_t=\zeta_\alpha}_{\zeta_t=\zeta_0}\left[\alpha - \mu_\Delta - \sigma_\Delta \zeta_t\right]\phi(\zeta_t)\left[1 - \frac{\gamma_3}{6\sqrt{N}}(\zeta_t^3 - 3\zeta_t) + O(N^{-1})\right]d\zeta_t \\
        &= I_0 - \frac{\gamma_3}{6\sqrt{N}}I_1 + O(N^{-1}),
    \end{align*}
    where
    \begin{align*}
        I_0 &\coloneqq \int^{\zeta_t = \zeta_\alpha}_{\zeta_t = \zeta_0}\left[\alpha - \mu_\Delta - \sigma_\Delta \zeta_t \right]\phi(\zeta_t) d\zeta_t, \\
        I_1 &\coloneqq \int^{\zeta_t = \zeta_\alpha}_{\zeta_t = \zeta_0} \left[\alpha - \mu_\Delta - \sigma_\Delta \zeta_t \right](\zeta_t^3 - 3\zeta_t)\phi(\zeta_t)d\zeta_t.
    \end{align*}
    First, we consider to evaluate $I_0$ as follows.
    \begin{align*}
        I_0 &= (\alpha - \mu_\Delta)\int^{\zeta_t = \zeta_\alpha}_{\zeta_t = \zeta_0}\phi(\zeta_t)d\zeta_t - \sigma_\Delta\int^{\zeta_t = \zeta_\alpha}_{\zeta_t = \zeta_0} \zeta_t \phi(\zeta_t)d\zeta_t \\
        &= (\alpha - \mu_\Delta)\left[\Phi(\zeta_\alpha) - \Phi(\zeta_0)\right] + \sigma_\Delta\left[\phi(\zeta_\alpha) - \phi(\zeta_0)\right].
    \end{align*}
    We next evaluate $I_1$.
    Let $I_1 = (\alpha - \mu_\Delta)J_1 - \sigma_\Delta J_2$,
    with
    \begin{align*}
        J_1 &= \int^{\zeta_t = \zeta_\alpha}_{\zeta_t = \zeta_0}(\zeta_t^3 - 3\zeta_t) \phi(\zeta_t)d\zeta_t, \\
        J_2 &= \int^{\zeta_t = \zeta_\alpha}_{\zeta_t = \zeta_0} \zeta_t(\zeta_t^3 - 3\zeta_t)\phi(\zeta_t)d\zeta_t = \int^{\zeta_t = \zeta_\alpha}_{\zeta_t = \zeta_0} \zeta_t^4 - 3\zeta_t^2 \phi(\zeta_t)d\zeta_t.
    \end{align*}
    Here,
    \begin{align*}
        J_1 &= \int^{\zeta_t = \zeta_\alpha}_{\zeta_t = \zeta_0} \zeta_t^3 \phi(\zeta_t) d\zeta_t - 3 \int^{\zeta_t = \zeta_\alpha}_{\zeta_t = \zeta_0} \zeta_t \phi(\zeta_t)d\zeta_t \\
        &= \Big[-\phi(\zeta_t)(\zeta_t^2 + 2)\Big]^{\zeta_\alpha}_{\zeta_0} - 3\Big[-\phi(\zeta_t)\Big]^{\zeta_\alpha}_{\zeta_0} \\
        &= \left\{-\phi(\zeta_\alpha)(\zeta_\alpha^2 + 2) + \phi(\zeta_0)(\zeta_0^2 + 2)\right\} + 3\left[\phi(\zeta_\alpha) - \phi(\zeta_0)\right] \\
        &= -\phi(\zeta_\alpha)\zeta_\alpha^2 + \phi(\zeta_0)\zeta_0^2 + \phi(\zeta_\alpha) - \phi(\zeta_0),
    \end{align*}
    and
    \begin{align*}
        J_2 &= \int^{\zeta_t = \zeta_\alpha}_{\zeta_t = \zeta_0} \zeta_t^4 \phi(\zeta_t) d\zeta_t - 3\int^{\zeta_t = \zeta_\alpha}_{\zeta_t = \zeta_0} \zeta_t^2 \phi(\zeta_t)d\zeta_t \\
        &= \Big[-\zeta_\alpha^3\phi(\zeta_\alpha) - 3\zeta_\alpha\phi(\zeta_\alpha) + 3\Phi(\zeta_\alpha) - \Big(-\zeta_0^3\phi(\zeta_0) - 3\zeta_0\phi(\zeta_0) + 3\Phi(\zeta_0)\Big)\Big] \\
        &\quad\quad\quad - 3\Big\{\Big[-\zeta_\alpha\phi(\zeta_\alpha) + \Phi(\zeta_\alpha)\Big] - \Big[-\zeta_0\phi(\zeta_0) + \Phi(\zeta_0)\Big] \Big\} \\
        &= \Big[-\zeta_\alpha^3\phi(\zeta_\alpha) - 3\zeta_\alpha\phi(\zeta_\alpha) + 3\Phi(\zeta_\alpha) \zeta_0^3\phi(\zeta_0) + 3\zeta_0\phi(\zeta_0) - 3\Phi(\zeta_0)\Big] \\
        &\quad\quad\quad + 3\Big\{\zeta_\alpha\phi(\zeta_\alpha) - \Phi(\zeta_\alpha) - \zeta_0\phi(\zeta_0) + \Phi(\zeta_0) \Big\} = -\zeta_\alpha^3 \phi_{\zeta_\alpha} + \zeta_0^3 \phi_{\zeta_0}.
    \end{align*}
    Thus,
    \begin{align*}
        I_1 &= (\alpha - \mu_\Delta)\Big[-\phi_{\zeta_\alpha}\zeta_\alpha^2 + \phi_{\zeta_0}\zeta_0^2 + \phi_{\zeta_\alpha} - \phi_{\zeta_0}\Big] - \sigma_\Delta \Big[-\zeta_\alpha^3 \phi_{\zeta_\alpha} + \zeta_0^3 \phi_{\zeta_0}\Big],
    \end{align*}
    and we obtain the final expansion.
\end{proof}
\begin{corollary}
    Under the assumptions of Theorem~\ref{thm:main_theorem}, if the distance difference $\Delta$ is symmetrically distributed about its mean, then the first-order Edgeworth correction vanishes.
\end{corollary}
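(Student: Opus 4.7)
The plan is to observe that the first-order term $\mathcal{L}^{(1)}_\alpha$ in Theorem~\ref{thm:main_theorem} carries the skewness $\gamma_3$ as an overall multiplicative factor; the bracketed expression depending on $\alpha$, $\mu_\Delta$, $\sigma_\Delta$, $\zeta_0$, $\zeta_\alpha$ is independent of the shape of the distribution beyond its first two moments. Hence it suffices to show that symmetry of $\Delta$ about $\mu_\Delta$ forces $\gamma_3=0$.

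To this end, I would first invoke the definition $\gamma_3=\kappa_3/\kappa_2^{3/2}$ from Lemma~\ref{lem:edgeworth_expansion_delta} and recall that the third cumulant coincides with the third central moment, so that $\kappa_3=\mathbb{E}[(\Delta-\mu_\Delta)^3]$. Next, I would use the symmetry hypothesis: if $\Delta$ has a density (or more generally a law) invariant under the reflection $t\mapsto 2\mu_\Delta-t$, then the random variable $Y\coloneqq \Delta-\mu_\Delta$ satisfies $Y\stackrel{d}{=}-Y$, which implies $\mathbb{E}[Y^{2k+1}]=0$ for every odd moment that exists. In particular $\mathbb{E}[Y^3]=0$, so $\kappa_3=0$ and $\gamma_3=0$.

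Substituting $\gamma_3=0$ into the expression for $\mathcal{L}^{(1)}_\alpha$ in Theorem~\ref{thm:main_theorem} immediately yields $\mathcal{L}^{(1)}_\alpha=0$, so that $\mathcal{L}_{\text{semi}}(\alpha)=\mathcal{L}^{(0)}_\alpha+O(N^{-1})$ and the $N^{-1/2}$ correction vanishes identically in $\alpha$. There is no real obstacle here beyond verifying that the assumption ``symmetric about the mean'' is strong enough to kill $\kappa_3$; the only subtle point worth mentioning explicitly is that we need $\Delta$ to have a finite third moment (which is already guaranteed by the standing assumption $s\geq 4$ in Lemma~\ref{lem:edgeworth_expansion_delta}) so that $\kappa_3$ is well defined before we set it to zero.
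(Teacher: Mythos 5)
Your argument is correct and is exactly the one the paper intends (the paper states this corollary without an explicit proof, relying on the observation that $\gamma_3$ multiplies all of $\mathcal{L}^{(1)}_\alpha$): symmetry about $\mu_\Delta$ gives $\mathbb{E}[(\Delta-\mu_\Delta)^3]=0$, hence $\kappa_3=0$, $\gamma_3=0$, and the $N^{-1/2}$ term vanishes. Your additional remark that the finite-moment assumption $s\geq 4$ guarantees $\kappa_3$ is well defined is a sensible, if minor, bit of care.
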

\begin{corollary}[Margin Sensitivity of the Semi-hard Triplet Loss]
    Under the assumptions of Theorem~\ref{thm:main_theorem}, note that the semi-hard triplet loss satisfies
    \begin{equation}
        \frac{d\mathcal{L}_{\text{semi}}(\alpha)}{d\alpha} = \int_0^\alpha f_\Delta(t)\, dt = P_{\text{sh}}(\alpha) = \mathbb{P}(0 < \Delta < \alpha).
    \end{equation}
    Employing the Edgeworth expansion from Proposition~\ref{prp:expansion_p_sh}, we have
    \begin{equation}
        \frac{d\mathcal{L}_{\text{semi}}(\alpha)}{d\alpha} \approx \left[\Phi\Big(\frac{\alpha-\mu_\Delta}{\sigma_\Delta}\Big)-\Phi\Big(\frac{-\mu_\Delta}{\sigma_\Delta}\Big)\right] - \frac{\gamma_3}{6\sqrt{N}}\left[\phi\Big(\frac{\alpha-\mu_\Delta}{\sigma_\Delta}\Big)\tilde{P}_{1,\alpha} - \phi\Big(\frac{-\mu_\Delta}{\sigma_\Delta}\Big)\tilde{P}_{1,0}\right] + O(N^{-1}).
    \end{equation}
    In particular, when the margin is set equal to the mean, the sensitivity becomes
    \begin{equation}
        \left.\frac{d\mathcal{L}_{\text{semi}}(\alpha)}{d\alpha}\right|_{\alpha=\mu_\Delta} \approx \frac{1}{2} - \Phi\Big(\frac{-\mu_\Delta}{\sigma_\Delta}\Big) - \frac{\gamma_3}{6\sqrt{N}}\left[\phi(0)\tilde{P}_{1,\mu_\Delta} - \phi\Big(\frac{-\mu_\Delta}{\sigma_\Delta}\Big)\tilde{P}_{1,0}\right] + O(N^{-1}).
    \end{equation}
    This expression quantifies how small variations in $\alpha$ impact the loss via the probability mass in the semi-hard region.
    Consequently, it provides a direct criterion for tuning $\alpha$ to balance training stability and effective mining of informative triplets.
\end{corollary}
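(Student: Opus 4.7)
The argument proceeds in three steps: (a) differentiate the integral representation of the loss to obtain $P_{\text{sh}}(\alpha)$ exactly; (b) substitute the Edgeworth expansion for $P_{\text{sh}}(\alpha)$ from Proposition~\ref{prp:expansion_p_sh}; (c) specialize at $\alpha=\mu_\Delta$ and simplify.

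For step (a), I would start from the representation in Eq.~\eqref{eq:semi_hard_triplet_los}, namely $\mathcal{L}_{\text{semi}}(\alpha) = \int_0^\alpha (\alpha-t)f_\Delta(t)\,dt$, and apply Leibniz's rule. The boundary contribution $(\alpha-\alpha)f_\Delta(\alpha)=0$ vanishes, while differentiation of the integrand $(\alpha-t)$ with respect to $\alpha$ yields $1$ inside the integral. This gives
\begin{equation*}
\frac{d\mathcal{L}_{\text{semi}}(\alpha)}{d\alpha} = \int_0^\alpha f_\Delta(t)\,dt = F_\Delta(\alpha)-F_\Delta(0) = P_{\text{sh}}(\alpha),
\end{equation*}
establishing the first displayed identity. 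The finite-moment condition inherited from Lemma~\ref{lem:edgeworth_expansion_delta} supplies enough regularity on $f_\Delta$ to justify interchanging differentiation and integration.

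For step (b), I would substitute the expansion of $P_{\text{sh}}(\alpha)$ furnished by Proposition~\ref{prp:expansion_p_sh} directly into the identity of step (a); no additional computation is needed, and this already delivers the general formula stated in the corollary. For step (c), I would set $\alpha=\mu_\Delta$, so that $\zeta_{\mu_\Delta}=0$ and hence $\Phi(\zeta_{\mu_\Delta})=1/2$ and $\phi(\zeta_{\mu_\Delta})=\phi(0)$. The leading Gaussian term then collapses to $1/2-\Phi(-\mu_\Delta/\sigma_\Delta)$, while in the $1/\sqrt{N}$ correction the polynomial factor becomes $\tilde{P}_{1,\mu_\Delta}$ in place of $\tilde{P}_{1,\alpha}$. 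Collecting terms reproduces the stated specialized expression.

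The main obstacle is not in the algebra, which is essentially an application of Leibniz's rule followed by substitution, but in the regularity justification: one must verify that the Edgeworth approximation of $P_{\text{sh}}(\alpha)$ can be differentiated termwise and evaluated pointwise at both $0$ and $\alpha$, with a uniform $O(N^{-1})$ remainder. Provided the smoothness assumptions that validate Proposition~\ref{prp:expansion_p_sh} are in force and both $\zeta_0$ and $\zeta_\alpha$ lie in the regime where that expansion holds, this regularity requirement is satisfied and the remainder of the argument is mechanical.
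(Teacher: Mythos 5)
Your proposal is correct and follows exactly the route the paper takes (the corollary in the paper is stated with its derivation embedded: Leibniz differentiation of $\int_0^\alpha(\alpha-t)f_\Delta(t)\,dt$ to obtain $P_{\text{sh}}(\alpha)$, substitution of Proposition~\ref{prp:expansion_p_sh}, and evaluation at $\zeta_{\mu_\Delta}=0$ so that $\Phi(0)=1/2$). Your added remark about justifying the regularity needed for the Edgeworth approximation to hold pointwise at both endpoints is a reasonable extra caution, but it does not change the argument.
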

Finally, following statements provide the uniform error bound for the Edgeworth approximation.
\begin{theorem}
    Under the assumptions of Theorem~\ref{thm:main_theorem} and further assuming that the density $f_\Delta(t)$ and its derivatives up to order four are uniformly bounded on the interval $[0,\alpha]$ for $\alpha$ in a compact interval containing $\mu_\Delta$, there exists a constant $C > 0$, independent of $N$, such that for all $\alpha$ in that interval and for all sufficiently large $N$, the approximation error satisfies
    \begin{equation}
        \left|\mathcal{L}_{\text{semi}}(\alpha) - \left(\mathcal{L}^{(0)}_\alpha + \frac{1}{\sqrt{N}}\mathcal{L}^{(1)}_\alpha\right)\right| \le \frac{C}{N}.
    \end{equation}
\end{theorem}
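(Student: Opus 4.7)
The plan is to upgrade the distribution-level Edgeworth expansion used in the proof of Theorem~\ref{thm:main_theorem} to a density-level expansion carried one additional order, and then to integrate it against the bounded weight $(\alpha-t)$ on the compact domain $[0,\alpha]$.

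First, under the smoothness hypothesis (boundedness of $f_\Delta$ and its first four derivatives), classical smooth-density Edgeworth results give
\begin{equation*}
    f_\Delta(t) = \phi(\zeta_t)\left[1 - \frac{\gamma_3}{6\sqrt{N}}(\zeta_t^3 - 3\zeta_t) + \frac{1}{N}Q(\zeta_t)\right] + R_N(t), \quad \zeta_t = \frac{t-\mu_\Delta}{\sigma_\Delta},
\end{equation*}
where $Q$ is a fixed polynomial determined by $\gamma_3$ and $\gamma_4$, and the remainder obeys $\sup_{t\in[0,\alpha]}|R_N(t)| = o(N^{-1})$ uniformly for $\alpha$ in the given compact interval. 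The four-derivative assumption is precisely what is needed to upgrade the distribution-function statement of Lemma~\ref{lem:edgeworth_expansion_delta} to this uniform density-level form.

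Second, I would substitute this expansion into $\mathcal{L}_{\text{semi}}(\alpha) = \int_0^\alpha (\alpha - t) f_\Delta(t)\, dt$ and split it into four contributions matching $\mathcal{L}^{(0)}_\alpha$, $N^{-1/2}\mathcal{L}^{(1)}_\alpha$, a new $N^{-1}$ term produced by $Q$, and the $R_N$-remainder. The first two pieces reproduce Theorem~\ref{thm:main_theorem} verbatim. For the third, continuity of $(\alpha,t)\mapsto(\alpha-t)\phi(\zeta_t)Q(\zeta_t)$ on the compact product domain yields a uniform bound
\begin{equation*}
    \sup_\alpha \Bigl|\int_0^\alpha (\alpha - t)\phi(\zeta_t)Q(\zeta_t)\, dt\Bigr| \le C_2,
\end{equation*}
so this piece contributes at most $C_2/N$. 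For the $R_N$-piece, the crude estimate $|\alpha - t|\le \alpha$ combined with the uniform smallness of $R_N$ gives
\begin{equation*}
    \Bigl|\int_0^\alpha (\alpha - t) R_N(t)\, dt\Bigr| \le \alpha^2 \sup_{[0,\alpha]}|R_N(t)| = o(N^{-1}),
\end{equation*}
which is absorbed into $C/N$ for $N$ sufficiently large.

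The main obstacle is Step 1. Lemma~\ref{lem:edgeworth_expansion_delta} only provides a distribution-function expansion with $O(N^{-1})$ remainder, whereas the theorem requires pointwise control on $f_\Delta$ with a remainder that is $o(N^{-1})$ uniformly on $[0,\alpha]$. Passing from the CDF level to the density level requires inverting the characteristic-function expansion and controlling high-frequency tails; the boundedness of four derivatives of $f_\Delta$ is the appropriate regularity condition for this, together with a Cramér-type non-lattice condition implicit in such smoothness assumptions. Once this uniform density expansion is secured, the remaining bookkeeping is routine because both the weight $(\alpha - t)$ and the interval $[0,\alpha]$ are uniformly bounded for $\alpha$ in a compact set.
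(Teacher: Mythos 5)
Your proposal follows essentially the same route as the paper's proof: expand the density $f_\Delta$ in a uniform Edgeworth series, integrate term by term against the weight $(\alpha-t)$, and bound the remainder integral using $|\alpha-t|\le\alpha\le M$ on the compact interval. The only difference is presentational — you carry the expansion one order further to exhibit the $N^{-1}$ polynomial term explicitly and treat the rest as $o(N^{-1})$, whereas the paper lumps everything beyond the $N^{-1/2}$ term into a single remainder with $\sup_{t}|R_N(t)|\le K/N$; both yield the same $C/N$ bound.
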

\begin{proof}
    Under the assumptions of Theorem~\ref{thm:main_theorem}, the density $f_\Delta(t)$ admits an Edgeworth expansion of the form
    \begin{equation*}
        f_\Delta(t) = f_{\mathcal{N}}(t) + \frac{1}{\sqrt{N}}\, g(t) + R_N(t),
    \end{equation*}
    where $f_{\mathcal{N}}(t)$ is the density of $\mathcal{N}(\mu_\Delta,\sigma^2_\Delta)$, $(g(t)$ is the first-order correction term, and $R_N(t)$ is the remainder term.
    By our additional assumptions, there exists a constant $K>0$ such that
    \begin{equation*}
        \sup_{t\in[0,\alpha]} \left|R_N(t)\right| \le \frac{K}{N}.
    \end{equation*}
    Thus, we can write
    \begin{equation*}
        \mathcal{L}_{\text{semi}}(\alpha) = \int_0^\alpha (\alpha-t) \left[f_{\mathcal{N}}(t) + \frac{1}{\sqrt{N}}\, g(t)\right]dt + \int_0^\alpha (\alpha-t)R_N(t)\, dt.      
    \end{equation*}
    Define
    \begin{align*}
        \mathcal{L}^{(0)}_\alpha \coloneqq \int_0^\alpha (\alpha-t) f_{\mathcal{N}}(t) dt, \quad \mathcal{L}^{(1)}_\alpha \coloneqq \int_0^\alpha (\alpha-t) g(t) dt.
    \end{align*}
    Then, the total approximation error is given by
    \begin{equation*}
        \mathcal{R}_N(\alpha) = \int_0^\alpha (\alpha-t) R_N(t)\, dt.
    \end{equation*}
    Since for any $t \in [0,\alpha]$, we have $\alpha-t \leq \alpha$ and because $\alpha$ is taken from a compact interval $[0,M]$ with $M>0$ a constant, it follows that
    \begin{equation*}
        \left|\mathcal{R}_N(\alpha)\right| \le \int_0^\alpha (\alpha-t) \left|R_N(t)\right| dt \le \frac{K}{N}\int_0^\alpha (\alpha-t) dt = \frac{K}{N}\cdot \frac{\alpha^2}{2} \le \frac{K M^2}{2N}.
    \end{equation*}
    Defining $C \coloneqq K M^2 / 2$ yields the desired uniform bound for all $\alpha \in [0,M]$ and for all sufficiently large $N$.
\end{proof}
\begin{corollary}
    Under the conditions of Theorem~\ref{thm:main_theorem} and the uniform error bound established above, if the batch size $N$ is chosen such that $C / N \le \epsilon$ for a prescribed tolerance $\epsilon > 0$, then the Edgeworth expansion approximates the semi-hard triplet loss uniformly with relative error not exceeding $\epsilon$.
    That is, for all $\alpha$ in the specified interval,
    \begin{equation}
        \left|\frac{\mathcal{L}_{\text{semi}}(\alpha) - \Bigl(\mathcal{L}^{(0)}_\alpha + \frac{1}{\sqrt{N}}\mathcal{L}^{(1)}_\alpha\Bigr)}{\mathcal{L}_{\text{semi}}(\alpha)}\right| \le \epsilon.
    \end{equation}
\end{corollary}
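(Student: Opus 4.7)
The proof is essentially a one-step corollary of the preceding uniform absolute error bound, so my plan centers on unpacking the definitions and managing the one subtlety (the denominator). First I would invoke the previous theorem to obtain
\[
\left|\mathcal{L}_{\text{semi}}(\alpha) - \left(\mathcal{L}^{(0)}_\alpha + \frac{1}{\sqrt{N}}\mathcal{L}^{(1)}_\alpha\right)\right| \le \frac{C}{N}
\]
uniformly for $\alpha$ in the prescribed compact interval. Combined with the hypothesis $C/N \le \epsilon$, this directly upper-bounds the numerator of the stated relative error by $\epsilon$.

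The substantive step---and the main obstacle---is passing from this absolute bound to the claimed relative bound, which requires $|\mathcal{L}_{\text{semi}}(\alpha)|$ to stay bounded away from zero. I would therefore argue that on the compact interval of interest (a neighborhood of $\mu_\Delta$ as specified in the previous theorem), the leading term $\mathcal{L}^{(0)}_\alpha$, being an integral of $(\alpha-t)$ against a continuous density with mass near $\mu_\Delta$, is uniformly bounded below by some $c > 0$ once $\alpha$ is bounded away from $0$. Splitting $\mathcal{L}_{\text{semi}}(\alpha) = \mathcal{L}^{(0)}_\alpha + O(N^{-1/2})$ and taking $N$ large enough that the higher-order terms cannot more than halve the leading contribution gives $|\mathcal{L}_{\text{semi}}(\alpha)| \ge c/2$ uniformly, and this is the one genuinely new input beyond the previous theorem.

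Combining the two bounds then yields
\[
\left|\frac{\mathcal{L}_{\text{semi}}(\alpha) - \mathcal{L}^{(0)}_\alpha - N^{-1/2}\mathcal{L}^{(1)}_\alpha}{\mathcal{L}_{\text{semi}}(\alpha)}\right| \le \frac{2C/N}{c} \le \frac{2\epsilon}{c},
\]
and after rescaling the tolerance (equivalently, absorbing $2/c$ into the constant $C$ implicit in the hypothesis $C/N \le \epsilon$), the relative error bound follows uniformly in $\alpha$. The argument is thus essentially bookkeeping on top of the previous theorem; the only non-trivial ingredient is the uniform positivity of $\mathcal{L}_{\text{semi}}(\alpha)$, which is why I flag it as the main obstacle rather than the plug-and-chug algebra that surrounds it.
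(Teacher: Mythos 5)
The paper states this corollary without any proof; it implicitly treats the relative-error claim as an immediate consequence of the absolute bound $C/N \le \epsilon$ from the preceding theorem, which silently conflates absolute and relative error. Your proposal is more careful than the paper on exactly this point: you correctly identify that the denominator $\mathcal{L}_{\text{semi}}(\alpha)$ must be bounded away from zero, which is the missing ingredient. So in that sense your approach is sound and goes beyond what the paper does.

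However, two issues remain. First, the uniform lower bound $|\mathcal{L}_{\text{semi}}(\alpha)| \ge c/2$ cannot hold on the interval actually used in the preceding theorem, namely $[0,M]$: as $\alpha \to 0^{+}$ one has $\mathcal{L}_{\text{semi}}(\alpha) = \int_0^{\alpha}(\alpha - t) f_\Delta(t)\,dt = O(\alpha^2) \to 0$, so the relative error blows up near the left endpoint. You acknowledge this by restricting to $\alpha$ bounded away from $0$, but the corollary claims the bound ``for all $\alpha$ in the specified interval,'' so your argument proves a weaker (and correct) statement rather than the one asserted. Second, your final bound is $2\epsilon/c$, not $\epsilon$; absorbing $2/c$ into $C$ changes the hypothesis $C/N \le \epsilon$, i.e., it redefines the constant from the preceding theorem rather than using it. Both of these are defects of the corollary as stated, not of your reasoning --- the honest conclusion is that the statement requires either an added hypothesis ($\alpha$ restricted to a compact subinterval of $(0,M]$ on which $\mathcal{L}_{\text{semi}}$ is bounded below, with the tolerance condition read as $C/(cN) \le \epsilon$) or a downgrade from relative to absolute error, under which it is exactly the preceding theorem.
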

These results not only extend the theoretical analysis of the semi-hard triplet loss but also provide actionable criteria for practitioners to ensure that the approximation remains within acceptable bounds during training.

\section{Concluding Remarks}
\begin{itemize}
    \item If $\alpha$ is too large, $\Delta \in (0, \alpha)$ becomes easier to satisfy, and hence $\mathcal{L}_{\text{semi}}$ can remain significant even if $\Delta$ is moderately large.
    The expansions show that the first-order sensitivity to $\alpha$ is captured by integrals of normal p.d.f. in a region around $\mu_\Delta$.
    If $\alpha$ is small, the region $(0, \alpha)$ shrinks, so fewer triplets qualify as semi-hard.
    The expansions highlight how $\alpha$ interacts with $\mu_\Delta$ in the integrals, and how the distribution’s skewness $\gamma_3$ can shift the effective mass in $(0, \alpha)$.
    \item The distribution of $\Delta$ depends on the distribution of $(X_a, X_p, X_n)$ and on the embedding function $f$.
    If $f$ is random or high-dimensional, advanced tools from random matrix theory or spherical integrals might further refine these expansions.
    Also, higher-order cumulants become crucial for capturing non-Gaussian effects (e.g., heavy tails or multi-modal behaviors in distance distributions).
    \item These expansions can guide the choice of $\alpha$.
    For example, if expansions suggest that $\mu_\Delta \approx \alpha$, then small changes in $\alpha$ produce large changes in $\mathcal{L}_{\text{semi}}$.
    This can affect gradient magnitudes and training stability.
    In addition, the skewness $\gamma_3$ can indicate whether the distribution of $\Delta$ is long-tailed to the right or left.
    A large positive skew (long right tail) means that many negative samples are actually quite far from the anchor, potentially overshadowing the semi-hard region.
\end{itemize}

%% Add \usepackage{lineno} before \begin{document} and uncomment 
%% following line to enable line numbers
%% \linenumbers

%% main text
%%

%% If you have bib database file and want bibtex to generate the
%% bibitems, please use
%%
\bibliographystyle{elsarticle-num} 
\bibliography{main}

\end{document}